\documentclass[11pt]{article}

\usepackage[preprint]{acl}

\usepackage{times}
\usepackage{amsmath}
\usepackage{latexsym}
\usepackage{amsfonts}
\usepackage{booktabs}
\usepackage{algorithm}
\usepackage{tabularray}
\usepackage{algorithmic}
\usepackage{multirow}
\usepackage[normalem]{ulem}
\usepackage{booktabs}
\usepackage{xspace}
\usepackage{rotating}
\usepackage[T1]{fontenc}

\usepackage[utf8]{inputenc}

\usepackage{microtype}

\usepackage{inconsolata}

\usepackage{graphicx}
\usepackage{xcolor}
\usepackage[most]{tcolorbox}
\definecolor{colorPrompt}{RGB}{240, 242, 245}
\definecolor{colorOurModel}{RGB}{235, 248, 242}
\definecolor{colorGRPO}{RGB}{255, 248, 235}
\definecolor{colorBase}{RGB}{252, 235, 235}
\definecolor{frameColor}{RGB}{60, 60, 60}

\usepackage{amssymb}
\usepackage{amsthm}
\newtheorem{theorem}{Theorem}[section]
\newtheorem{proposition}[theorem]{Proposition}

\newcommand{\ourmethod}{SPARD\xspace}

\title{\ourmethod: Self-Paced Curriculum for RL Alignment via Integrating \\ Reward Dynamics and Data Utility}

\author{
    \textbf{Xuyang Zhi\textsuperscript{1}},
    \textbf{Peilun Zhou\textsuperscript{2}}, 
    \textbf{Chengqiang Lu\textsuperscript{2}},
    \textbf{Hang Lv\textsuperscript{1}},
    \textbf{Yiwei Liang\textsuperscript{1}},
    \\
    \textbf{Rongyang Zhang\textsuperscript{1}},
    \textbf{Yan Gao\textsuperscript{2}},
    \textbf{Yi Wu\textsuperscript{2}},
    \textbf{Yao Hu\textsuperscript{2}},
    \textbf{Hongchao Gu\textsuperscript{1}},
    \\
    \textbf{Hao Wang\textsuperscript{1}},
    \textbf{Defu Lian\textsuperscript{1}},
    \textbf{Enhong Chen\textsuperscript{1}}
    \\
    \textsuperscript{1}University of Science and Technology of China, \textsuperscript{2}Xiaohongshu Inc.
    \\
}

\usepackage[most]{tcolorbox}

\tcbset{
    commonpromptstyle/.style={
        colback=white,
        colframe=gray!50,
        colbacktitle=gray!15,
        coltitle=black,
        fonttitle=\bfseries,
        boxrule=0.5pt,
        arc=2pt,
        left=5pt, right=5pt, top=5pt, bottom=5pt,
        toptitle=2pt, bottomtitle=2pt,
        fontupper=\small 
    }
}

\newtcolorbox[auto counter]{promptbox}[3][]{
    commonpromptstyle,
    breakable,
    title={Prompt~\thetcbcounter: #2},
    label={#3}, 
    #1
}

\newtcolorbox[use counter from=promptbox]{promptbox*}[2][]{
    commonpromptstyle,
    width=\textwidth, 
    float*=t,         
    title={Prompt~\thetcbcounter: #2},
    #1
}

\begin{document}
\maketitle

\begin{abstract}

The evolution of Large Language Models (LLMs) is shifting the focus from single, verifiable tasks toward complex, open-ended real-world scenarios, imposing significant challenges on the post-training phase. In these settings, the scale and complexity of reward systems have grown significantly, transitioning toward multi-objective formulations that encompass a comprehensive spectrum of model capabilities and application contexts. However, traditional methods typically rely on fixed reward weights, ignoring non-stationary learning dynamics and struggling with data heterogeneity across dimensions.
To address these issues, we propose  \ourmethod, a framework that establishes an automated, self-paced curriculum by perceiving learning progress to dynamically adjust multi-objective reward weights and data importance, thereby synchronizing learning intent with data utility for optimal performance. Extensive experiments across multiple benchmarks demonstrate that  \ourmethod significantly enhances model capabilities across all domains.


\end{abstract}

\section{Introduction}

Recently, Large Language Models (LLMs) have seamlessly integrated into people's daily lives and professional workflows. 
As application scenarios become increasingly diverse and complex, the capability evolution of LLMs is accelerating from single verifiable tasks such as mathematical reasoning and code generation  \cite{deepseekr1,Tulu3,simplerl} toward open-ended real-world scenes like general dialogue and deepresearch \cite{drtulu,rlmt, huang2024chemevalcomprehensivemultilevelchemical,zhang2025ragigbenchinnovativeevaluationragbased,liang2025adaptiveschemaawareeventextraction,yin2025featureinteractionfeaturegeneration}.
This paradigm shift imposes significantly higher demands on the post-training phase, not only requiring the model to uphold objective factual accuracy but also demanding it to cater to subjective perceptual preferences.

\begin{figure}[t]
    \centering
    \includegraphics[width=1.00\columnwidth]{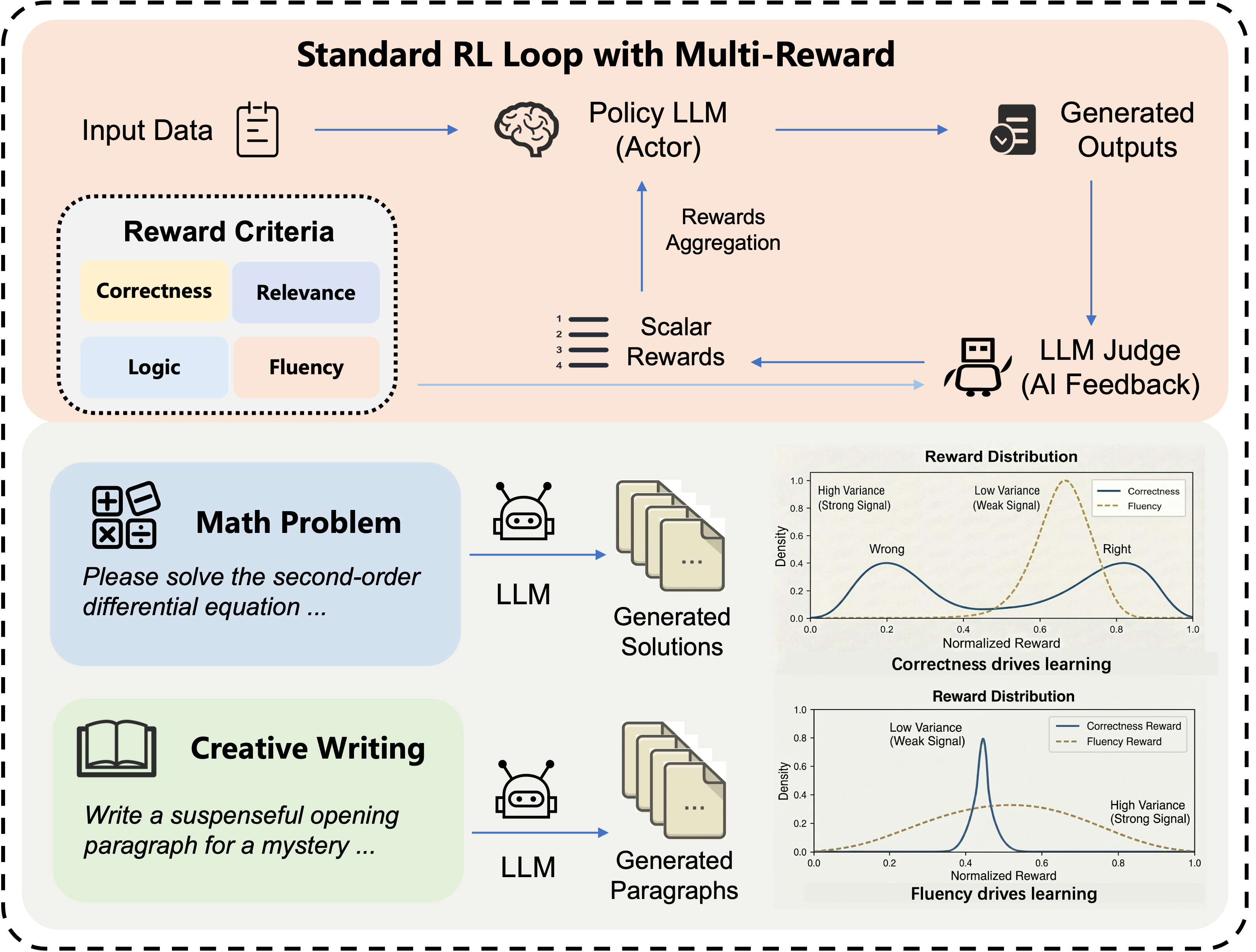}
    \caption{Illustration of the standard \textbf{Multi-Reward} RL loop and examples of training characteristics across diverse data types. The upper panel depicts the workflow of generating multi-reward via an LLM judge and aggregating them for policy updates. The lower panel highlights \textbf{data heterogeneity}, demonstrating that different types of input data differentially impact specific reward dimensions during training.}
    \label{fig:motivation}
    \vspace{-0.3cm}
\end{figure}


In these scenarios, the definition of rewards has evolved into multi-objective frameworks covering diverse criteria like correctness and fluency \cite{rar,rubric-anchors}, as shown in Figure ~\ref{fig:motivation}. However, effectively leveraging these multi-dimensional signals remains a significant challenge. Prevailing methods typically aggregate signals using fixed weights, ignoring non-stationary learning dynamics. Consequently, static strategies risk over-optimizing dimensions with diminishing returns while neglecting bottlenecks \cite{drbo,yhc,shen2025promptingenoughexploringknowledge}. This issue is further exacerbated by data heterogeneity, where a training example that is highly informative for one criterion (e.g., correctness) may be suboptimal for another (e.g., fluency). As a result, static paradigms lack the flexibility to adapt to evolving bottlenecks and varying data utility.


To address these challenges, methods such as RaR \cite{rar} and MPO \cite{mpo} implicitly synthesize multi-criterion into a single reward signal by incorporating multiple dimensions into a prompt for judge models to output a holistic score, which obscures the granularity of supervision and hinders the model from localizing specific optimization directions. Alternatively, dynamic strategies like DRBO \cite{drbo} and MDO \cite{mdo} attempt to mitigate weaknesses by prioritizing objectives with lower scores. However, these approaches overlook data heterogeneity and risk leveraging inappropriate data samples for the targeted capabilities, leading to inefficient optimization and inter-objective interference. Conversely, Omni-Thinker \cite{ominithinker} and Rubicon \cite{rubric-anchors} address data variance through curriculum-style schedules that transition from strongly constrained tasks to weakly constrained generation, yet rely on static progression plans that lack the flexibility to adapt to the real-time evolution of model capabilities.

To overcome these limitations, we propose \ourmethod, a framework that establishes an automated, \textbf{S}elf-\textbf{P}aced curriculum for RL \textbf{A}lignment by perceiving learning progress to synchronize \textbf{R}eward Dynamics with \textbf{D}ata utility. Specifically, we treat learning progress as a signal to dynamically adjust reward weights, directing the model's attention toward dimensions with significant remaining improvement potential. In parallel, \ourmethod implements adaptive data prioritization by upweighting sample categories that are highly aligned with stage-specific objectives and yield the largest marginal gains. This integrated mechanism ensures that as the model evolves, limited training compute remains precisely focused on the most promising objectives and the most informative data samples.

To sum up, our contributions are threefold:

\begin{itemize}
    \item We propose \ourmethod, an automated curriculum framework that leverages real-time learning progress to enable self-paced learning for complex, open-ended generation tasks, dynamically guiding capability acquisition through increasingly challenging stages.
    \item We introduce a unified optimization framework that couples reward weight adjustment with adaptive data importance weighting. This closed-loop system synchronizes learning intent with data utility, overcoming the limitations of single-sided curriculum strategies.
    \item Extensive experiments across multiple benchmarks demonstrate that SPARD consistently enhances model capabilities across diverse dimensions. Further analysis validates the framework's advantages in learning efficiency and stability, validating the effectiveness of our proposed framework.
\end{itemize}

\section{Related Works}

\paragraph{Reinforcement Learning Alignment via Feedback}

To navigate increasingly sophisticated LLM scenarios, hybrid reward strategies integrate multidimensional feedback signals to satisfy fine-grained quality benchmarks across open-ended tasks \cite{rlmr,openrubircs,spct}. For example, Writing-Zero \cite{writing-zero} uses a Pairwise Generative Reward Model to convert self-critique into verifiable feedback for creative writing. QA-LIGN \cite{QA-LIGN} and RLCF \cite{rlcf} further decompose evaluation into explicit principles, delivering fine-grained feedback that targets specific issues in logic or style. This idea has also been extended to multimodal reasoning, where process-level feedback guides step-by-step reasoning alongside outcome rewards \cite{autorubric-r1v}. Despite these advances, optimizing multiple forms of feedback remains challenging: most methods adopt static aggregation, which is often unable to adapt to shifting training dynamics, limiting performance gains.

\paragraph{Curriculum Learning for Reinforcement learning}
Curriculum learning structures training by progressing from easier to harder examples and is widely used in reinforcement learning to stabilize optimization \cite{kimik15,lightrl}. Rubicon \cite{rar} and Omni-Thinker \cite{ominithinker} follow a similar two-stage scheme, first training on strongly constrained tasks and then fine-tuning on more open-ended questions. However, these curricula are typically static and fail to adapt to the model’s evolving competence. Beyond static schedules, some methods \cite{selfevolve,dump} estimate difficulty from model-based priors and cast data reweighting as a multi-armed bandit problem to adjust sampling weights online, but they still rely on heuristic difficulty signals or annotations, limiting applicability when difficulty is ambiguous. In contrast, we propose a method that adaptively schedules both reward objectives and data importance based on online learning progress, allowing the curriculum to emerge from feedback rather than a fixed syllabus.

\section{Method}

\begin{figure*}
    \centering
    \includegraphics[width=0.95\textwidth]{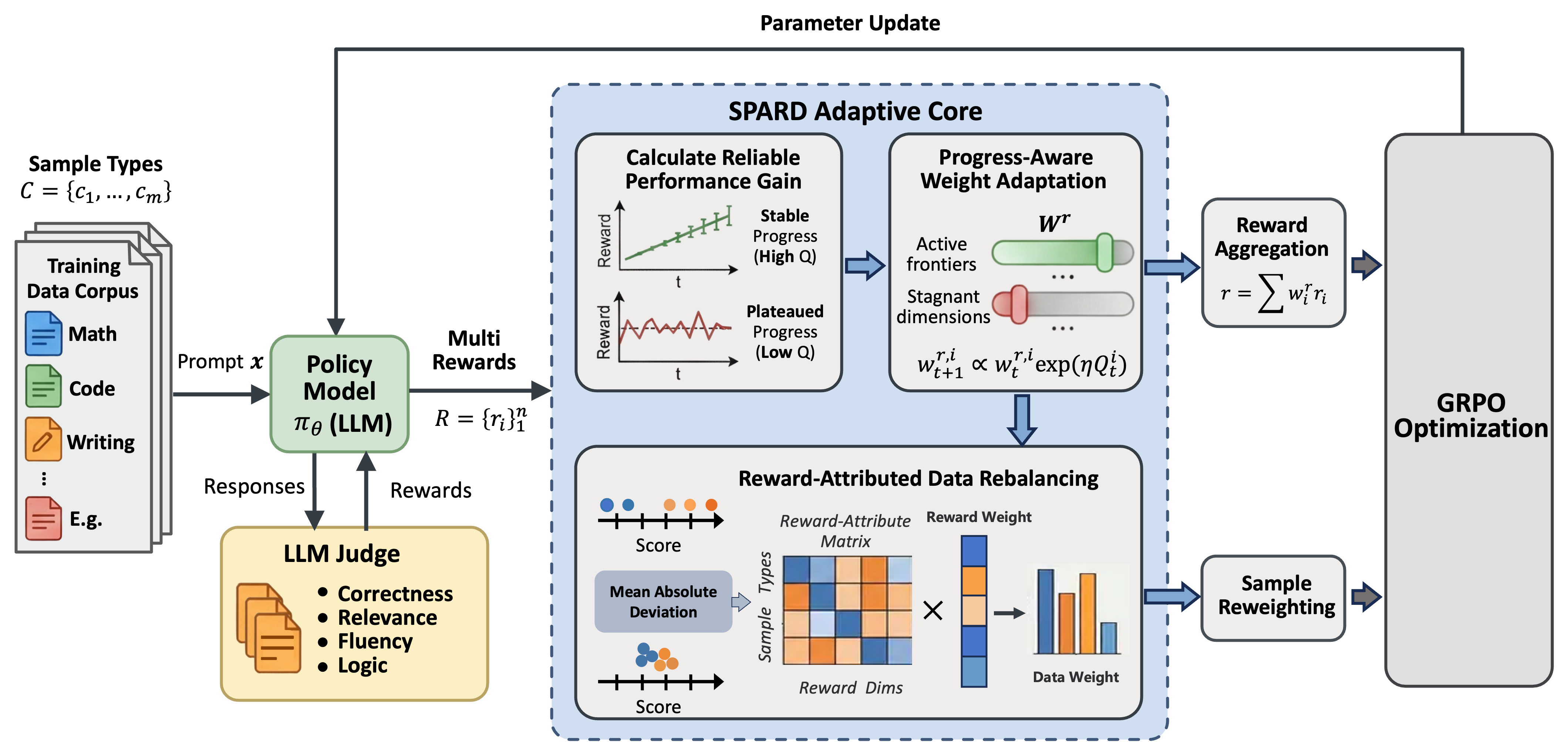}
    \caption{The framework of \ourmethod, which consists of two main synergistic mechanisms: (1) \textbf{Progress-Aware Weight Adaptation} dynamically adjusts reward weights ($\mathbf{w}^r$) based on the reliability of performance gains, and (2)\textbf{ Reward-Attributed Data Rebalancing} computes data weights ($\mathbf{w}^d$) by aggregating reward importance via a reward-attribute matrix derived from score dispersion. These components jointly guide the optimization to prioritize current learning objectives and leverage the most efficient data.}
    \label{fig:method}
    \vspace{-0.3cm}
\end{figure*}

\subsection{Preliminaries}

\paragraph{Task Formulation}
 An LLM $\pi_{\theta}$ (with parameters $\theta$) defines a probability distribution over response sequences $y$ given a query $x \sim \mathcal{D}$. To align LLMs with desired behaviors, we formulate language generation as a reinforcement learning (RL) problem.  The policy $\pi_{\theta}$ receives a scalar reward $r(x, y) \in \mathbb{R}$ that reflects the quality of the generation. The training objective is to optimize the policy parameters $\theta$ to maximize the expected reward over the dataset: 

\vspace{-0.2cm}
\begin{equation}
    J(\theta) = \mathbb{E}_{x \sim \mathcal{D}, y \sim \pi_{\theta}}[r(x, y)].
\end{equation}

\paragraph{Group Relative Policy Optimization (GRPO)}
To optimize the policy efficiently without an additional value network, we employ GRPO algorithm \cite{deepseekmath}. For each query $x$, the algorithm samples a group of $G$ outputs $\{y_i\}_{i=1}^{G}$ from the old policy $\pi_{\theta_{\text{old}}}$. The policy $\pi_\theta$ is updated by maximizing the following surrogate objective:
\vspace{-2mm}
\begin{equation}
\begin{split}
    & \mathcal{L}_{\text{GRPO}}(\theta) = \frac{1}{G} \sum_{i=1}^{G} \frac{1}{|y_{i}|} \sum_{t=1}^{|y_{i}|} \biggl\{ \min \Bigl( \rho_{i,t} \hat{A}_{i}, \\
    &\text{clip} \bigl(\rho_{i,t}, 1-\epsilon, 1+\epsilon\bigr) \hat{A}_{i} \Bigr) - \beta D_{\text{KL}}(\pi_{\theta} \| \pi_{\text{ref}}) \biggr\}.
\end{split}
\end{equation}
where $\rho_{i,t} = \frac{\pi_{\theta}(y_{i,t} \mid x, y_{i,<t})}{\pi_{\theta_{\text{old}}}(y_{i,t} \mid x, y_{i,<t})} $ is the importance ratio, $\epsilon$ is the clipping parameter, and $\beta$ controls the KL-divergence regularization. 
Crucially, GRPO estimates the baseline directly from group statistics. The advantage $\hat{A}_i$ for the $i$-th response is computed by standardizing the rewards within the group:
\begin{equation}
    \hat{A}_i = \frac{r_i - \mathrm{mean}(\{r_1, \dots, r_G\})}{\mathrm{std}(\{r_1, \dots, r_G\}) }.
    \label{advantage}
\end{equation}
Here, $r_i$ denotes the scalar reward for response $y_i$.
Consequently, the effectiveness of the optimization hinges heavily on the design and construction of this scalar signal $r_i$.

\paragraph{Multi-Reward Aggregation}

While scalar rewards suffice for tasks with objective ground truth, open-ended generation necessitates evaluating a diverse array of quality dimensions. We formalize this evaluation using a set of scoring criteria $\mathcal{P}=\{p_k\}_{k=1}^{N}$ to capture a comprehensive spectrum of model capabilities, where an $\text{LLM}_{\text{judge}}$ maps a response $y$ to a multi-dimensional reward vector $\mathbf{r}(y)$ such that $r_k(y) = \text{LLM}_{\text{judge}}(y, p_k)$. 

To facilitate RL optimization, existing approaches typically employ \textit{linear scalarization} to derive a unified learning signal:
\begin{equation}
r(x, y) = \sum_{k=1}^{N} w^r_k \cdot r_k(y), \quad \text{s.t.} \sum_{k=1}^{N} w^r_k = 1,
\end{equation}
where $\{w^r_k\}$ are the static weight hyperparameters. 

Although this simplifies optimization, it ignores the \textit{non-stationary learning dynamics} inherent in scaling reward dimensions. In practice, model capabilities exhibit \textit{asynchronous convergence}: different dimensions plateau at varying rates as training progresses. Enforcing fixed weights $\{w^r_k\}$ fails to adapt to this evolution, leading to inefficient gradient allocation and hindering the model's ability to achieve balanced proficiency across the entire objective space.

\subsection{Methodology}

In this section, we present \ourmethod, an RL framework that orchestrates an automated, self-paced curriculum. Diverging from fixed weighting schemes that overlook training dynamics, \ourmethod dynamically aligns the optimization trajectory with the model's evolving proficiency. At its core, the framework leverages Progress-Aware Weight Adaptation \ref{weight} to identify and prioritize capabilities within their prime learning phase. Concurrently, Reward-Attributed Data Rebalancing \ref{data} assigns adaptive importance weights to training samples, ensuring that the gradient updates are primarily driven by data that yields the highest marginal gains for these targeted objectives. The complete training process is presented in Algorithm \ref{alg:framework}.

\subsubsection{Progress-Aware Weight Adaptation} \label{weight}

This module focuses on the dynamic evolution of the reward weight vector $\mathbf{w^r}$ during training. We formulate this process as a dynamic resource allocation problem, where the objective is to direct the limited optimization budget toward dimensions that exhibit the highest learning potential. Static weighting schemes often fail to distinguish between \textit{stagnant dimensions} (where the model has reached a performance plateau) and \textit{active frontiers} (where capabilities are rapidly emerging). To bridge this gap, we treat the stable rate of improvement as a proxy for learnability, identifying dimensions where parameter updates yield the most significant and robust gains.

To capture these stable gains while filtering out transient noise, we draw on the Lower Confidence Bound (LCB) principle. We define the \textit{Reliable Performance Gain} $Q_t^i$ for the $i$-th reward as:
\begin{equation}
    \label{Q}
    Q_t^i = (\mu_t^i - \beta \sigma_t^i) - (\mu_{t-1}^i - \beta \sigma_{t-1}^i),
\end{equation}
where $\mu_t^i$ and $\sigma_t^i$ are the Exponential Moving Average (EMA) mean and standard deviation of the $i$-th reward component, and $\beta$ is a coefficient penalizing uncertainty. This formulation ensures that $Q_t^i$ is positive only when the mean improvement outweighs the variability, signaling robust acquisition of the corresponding capability. These reward statistics are updated as follows:
\begin{equation}
\begin{aligned}
    \label{stats}
    \mu_t^i &= \alpha \cdot r_t^i + (1-\alpha) \cdot \mu_{t-1}^i, \\
    \sigma_t^i &= \alpha \cdot \mathrm{std}_t^i + (1-\alpha) \cdot \sigma_{t-1}^i.
\end{aligned}
\end{equation}

To translate these progress signals into updated weights, we aim to maximize alignment with high-growth dimensions while preventing catastrophic forgetting or training instability caused by abrupt weight shifts. We formulate this as a KL-regularized Online Mirror Descent problem:
\begin{equation}
    \mathbf{w}_{t+1}^r = \operatorname*{arg\,max}_{\mathbf{w} \in \Delta_{n-1}} \left( \mathbf{Q}_t^\top \mathbf{w} - \frac{1}{\eta} \operatorname{KL}(\mathbf{w} \parallel \mathbf{w}_t^r) \right).
\end{equation}
The first term encourages the model to prioritize dimensions with the highest reliable gains, while the KL divergence serves as a proximal constraint to maintain a smooth optimization trajectory. The detailed derivation is provided in Appendix.~\ref{prop:optimal_weight_update}. The closed-form solution yields an exponentiated gradient update:
\begin{equation}
    \label{Wr}
    w_{t+1}^{r, i} = \frac{w_{t}^{r, i} \exp(\eta Q_t^i)}{\sum_{j=1}^{n} w_{t}^{r, j} \exp(\eta Q_t^j)},
\end{equation}
where $\eta$ is the learning rate for weight adaptation, controlling the sensitivity of the curriculum to recent progress. This mechanism naturally amplifies focus on fast-improving capabilities, synchronizing the optimization focus with the model's evolving proficiency frontier.

\begin{algorithm}[t]
\caption{\ourmethod Training Process}
\label{alg:framework}
\begin{algorithmic}[1]
\REQUIRE Policy $\pi_{\theta}$, N criteria $\{p_k\}_{k=1}^N$, $M$ data categories, interval $k$ , $\eta, \alpha, \beta, \mu$
\STATE \textbf{Init:} $\mathbf{w}^r, \mathbf{w}^d \leftarrow \text{Uniform}$; Stats $\mu, \sigma \leftarrow 0$
\FOR{step $t = 1, \dots, T$}
    \STATE Sample batch $\mathcal{B} = \bigcup_{j=1}^M \mathcal{B}_j$ from dataset
    \STATE Generate responses $\{y_i\}_{i=1}^G$ and evaluate reward vectors $\{\mathbf{r}_i\}_{i=1}^G$ using $\{p_k\}_{k=1}^N$
    \STATE Update statistics $\mu_t, \sigma_t$ based on current rewards \COMMENT{Eq.~\ref{stats}}

    \IF{$t \% k == 0$}
        \STATE Compute reliable gain $\mathbf{Q}_t$ and evolve reward weights $\mathbf{w}_{t}^r$ \COMMENT{Eq.~\ref{Q},~\ref{Wr}}
        \STATE Construct reward-data attribution distribution matrix $\tilde{F} \in \mathbb{R}^{N \times M}$ \COMMENT{Eq.~\ref{attribute},\ref{Norm}}
        \STATE Derive target data importance $\mathbf{u}$ from $\tilde{F}$ and $\mathbf{w}_{t}^r$ \COMMENT{Eq. ~\ref{U}}
        \STATE Update data weights $\mathbf{w}_{t}^d$  \COMMENT{Eq. ~\ref{Wd}}
    \ELSE
        \STATE $\mathbf{w}_{t}^r, \mathbf{w}_{t}^d \leftarrow \mathbf{w}_{t-1}^r, \mathbf{w}_{t-1}^d$
    \ENDIF

    \STATE Aggregate reward $r_i \leftarrow \sum_{k=1}^N w_{t,k}^r r_{i,k}$ for advantage $\hat{A}_i$ \COMMENT{Eq.~\ref{advantage}}
    \STATE $\theta \leftarrow \theta - \nabla_{\theta} \sum_j w_{t,j}^d \mathcal{L}_{\text{GRPO}}^{(j)}(\theta)$ \COMMENT{Eq. ~\ref{loss}}
\ENDFOR
\end{algorithmic}
\end{algorithm}

\subsubsection{Reward-Attributed Data Rebalancing} \label{data}

While the evolution of $\mathbf{w}^r_t$ determines the optimization \textit{direction}, the efficiency of this trajectory depends heavily on the underlying data utility. To synchronize data provision with the model's evolving proficiency, we propose a reward-attributed mechanism that realigns the importance of data categories based on their responsiveness to the identified growth areas. This process follows a structured pipeline consisting of reward-data attribution, weight aggregation, and loss reweighting.

We first quantify the sensitivity of each data category $C=\{c_j\}_{j=1}^m$ to different reward dimensions. Intuitively, a data category is most informative for a specific reward $i$ if the candidate responses for its prompts exhibit high score dispersion, providing a clear contrastive signal for the model to distinguish superior behaviors \cite{drtulu,dapo}. To formalize this, we construct an \emph{attribution matrix} $F \in \mathbb{R}^{n \times m}$, where $F_{ij}$ measures the utility of candidates in category $c_j$ along reward dimension $i$. For each prompt $b$ in a recent buffer $B_j$, we generate a set of $G$ candidate responses $\mathcal{G}_b$ and calculate the score separation using the mean absolute deviation (MAD):
\begin{equation}
\label{attribute}
F_{ij} = \frac{1}{|B_j|}\sum_{b\in B_j} \frac{1}{G}\sum_{x\in \mathcal{G}_b}\bigl|r_i(x)-\bar r_i^{(b)}\bigr|,
\end{equation}
where $\bar r_i^{(b)}$ denotes the group mean. Effectively, a larger $F_{ij}$ implies that category $c_j$ yields high-contrast supervision for reward $i$, while a small $F_{ij}$ suggests that reward signals are clustered for this category, leading to a weak gradient signal.

To translate these raw attribution scores into actionable importance weights, we first normalize $F$ such that each reward dimension $i$ induces a proper distribution over data categories. We apply a temperature-controlled Boltzmann normalization 
\begin{equation}
   \label{Norm}
    \tilde F_{ij} =\frac{\exp(F_{ij}/\mu) }{\sum_{k=1}^{m}\exp(F_{ik}/\mu)},
\end{equation}
where $\mu$ controls the sharpness of the mapping. We then define the \textbf{target data importance vector} $\mathbf{u} \in \mathbb{R}^m$ by aggregating these normalized attributions with the current reward importance $\mathbf{w}^r$ via a matrix product: 
\begin{equation}
    \label{U}
    u_{j}=\sum_{i=1}^{n} w_{i}^r\,\tilde F_{ij}. 
\end{equation}
Conceptually, $u_{j}$ is high when category $c_j$ is strongly attributed to reward dimensions that currently exhibit high learning potential. To ensure training stability, the global data weights $\mathbf{w}_t^d$ are updated via an EMA:
\begin{equation}
    \label{Wd}
    \mathbf{w}_t^d = \alpha \cdot \mathbf{u} + (1-\alpha) \cdot \mathbf{w}_{t-1}^d.
\end{equation}

Finally, $\mathbf{w}_t^d$ is used to reweight the training losses across categories. For a minibatch $\mathcal{B}=\bigcup_{j=1}^m \mathcal{B}_j$, the overall objective is formulated as:
\begin{equation}
    \label{loss}
    \mathcal{L}(\theta) = \sum_{j=1}^{m} w_j^d \cdot \frac{1}{|\mathcal{B}_j|}\sum_{x\in \mathcal{B}_j}\ell(x;\theta).
\end{equation}
By assigning higher weights to categories that are most conducive to the current optimization priorities, this mechanism ensures that the gradient updates are primarily driven by data samples that maximize cumulative optimization efficiency.



\section{Experiments}

\begin{table*}[!h]
\caption{Overall performance comparison on multiple benchmarks. The \textbf{bold} font indicates the best results and an \uline{underline} indicates the second-best results.}
\label{maintable}
\footnotesize
\renewcommand{\arraystretch}{1.15}
\setlength{\tabcolsep}{6pt}
\begin{tabular*}{\linewidth}{@{\extracolsep{\fill}} llcccccccc }
\toprule
\multicolumn{2}{l}{\multirow{2}{*}{\textbf{Methods}}} &
\multicolumn{3}{c}{\textbf{General Capability}} &
\multicolumn{2}{c}{\textbf{Creative Writing}} &
\multicolumn{2}{c}{\textbf{Chat}} &
\multirow{2}{*}{\textbf{AVG}} \\
\cmidrule(lr){3-5}\cmidrule(lr){6-7}\cmidrule(lr){8-9}
\multicolumn{2}{l}{} & \textbf{IFEval} & \textbf{GPQA}  & \textbf{LCB}   & \textbf{Arena-Hard} & \textbf{CW} & \textbf{MT-Bench} & \textbf{WildBench} &  \\
\midrule
\multicolumn{10}{l}{\textit{\textbf{Qwen2.5-7B-Instruct}}} \\
\midrule
 & Base                    & 70.79 & 33.84 & 39.75 & 10.70 & 48.09 & 77.93 & 41.72 & 46.12 \\
 & + SFT                   & 59.70 & 32.83 & 35.00 & 12.50 & 41.85 & 77.56 & 24.05 & 40.50 \\
 & + DPO                   & \uline{74.67} & 33.54 & 39.50 & 12.70 & 50.49 & 78.37 & 43.60 & 47.55 \\
 & + $\text{GRPO}_{\text{rm}}$      & 73.56 & 34.85 & 39.75 & 11.20 & 50.17 & 78.12 & 41.77 & 47.06 \\
 & + $\text{GRPO}_{\text{imp}}$     & 66.91 & 32.83 & \uline{40.00} & 12.40 & 45.88 & 78.62 & 42.47 & 45.59 \\
 & + $\text{GRPO}_{\text{avg}}$     & 73.75 & \uline{35.35} & \uline{40.00} & \uline{14.40} & \uline{50.89} & \uline{79.75} & \textbf{45.08} & \uline{48.46} \\
 & \textit{\textbf{+ Ours}}         & \textbf{75.78} & \textbf{38.38} & \textbf{41.75} & \textbf{15.60} & \textbf{52.49} & \textbf{81.38} & \uline{44.85} & \textbf{50.03} \\
\midrule
\multicolumn{10}{l}{\textit{\textbf{Qwen3-8B}}} \\
\midrule
 & Base                    & \uline{86.32} & 42.93 & 52.50 & 42.00 & 69.90 & 75.06 & 55.21 & 60.56 \\
 & + SFT                   & 84.73 & 33.33 & 45.25 & 32.70 & 57.50 & 71.62 & 22.09 & 49.60 \\
 & + DPO                   & 85.39 & 43.94 & 50.50 & 43.10 & \uline{73.15} & \uline{77.62} & 54.73 & 61.20 \\
  & + $\text{GRPO}_{\text{rm}}$      & 
 \uline{86.32} & 43.94 & 52.50 & 40.60 &  72.75 & 76.81 & 55.10 & 61.15 \\
 & + $\text{GRPO}_{\text{imp}}$     & 85.95 & 45.96 & 51.75 & 43.10 & 72.45 & 77.25 & \uline{55.73} & 61.74 \\
 & +  $\text{GRPO}_{\text{avg}}$    & \uline{86.32} & \uline{46.97} & \uline{52.75} & \uline{44.30} & 72.16 & 76.81 & \textbf{55.89} & \uline{62.17} \\
 & \textit{\textbf{+ Ours}}         & \textbf{88.17} & \textbf{49.49} & \textbf{54.75} & \textbf{45.90} & \textbf{73.95} & \textbf{78.31} & 55.29 & \textbf{63.69} \\
\bottomrule
\end{tabular*}
\end{table*}

\subsection{Experimental Setting}
\paragraph{Dataset}
We construct our dataset by selecting 5.4k prompts from the \textbf{WildChat-IF subset} \cite{wildchat}. It is sampled from WildChat’s conversational prompts, covering a broad range of user queries that closely reflect real-world scenarios. 
To improve optimization efficiency when training on this heterogeneous instruction collection, we annotate each prompt with a category label using an LLM-based classifier.
We partition the dataset into four different categories: \textbf{Code}, \textbf{Knowledge QA}, \textbf{Text Transformation}, and \textbf{Creative Writing}.
These category tags enable us to analyze the contribution of different data types during training.
For more detailed information on data classification, please refer to Appendix \ref{data construct}.

\paragraph{Baselines}

  To systematically evaluate the effectiveness of our proposed method, we compare it against several representative benchmarks. For direct alignment strategies, we include SFT and DPO, which utilize preferred responses and annotated preference pairs directly from the dataset. Regarding reward-based reinforcement learning methods, we evaluate the following approaches:
  \begin{itemize}
      \item $\text{GRPO}_{\text{rm}}$: A standard baseline that optimizes the policy using scalar signals derived from a learned reward model \cite{rlmt}.

       \item $\text{GRPO}_{\text{avg}}$: A baseline where the LLM judge generates individual rewards for each specific criterion independently. These partial rewards are then aggregated via static, uniform weighting to compute the final signal.

       \item $\text{GRPO}_{\text{imp}}$: An approach consolidating all criteria into a single prompt, delegating the implicit aggregation to the LLM judge to directly yield a final unified score \cite{rar}.
  \end{itemize}

\paragraph{Benchmarks}
We evaluate our models on a comprehensive suite of benchmarks spanning \textbf{General Capability}, \textbf{Creative Writing}, and \textbf{Chat}. Under General Capability, we examine fundamental reasoning and constraint adherence by employing \textbf{IFEval} \cite{ifeval} using loose prompt-level accuracy for verifiable instruction following, \textbf{LiveCodeBench (LCB)} \cite{livecodebench} for code generation, and \textbf{GPQA-Diamond} \cite{gpqa} to probe PhD-level scientific reasoning and domain-specific knowledge. For Creative Writing, we employ \textbf{CreativeWritingV3 (CW)} \cite{cwv3} and \textbf{Arena-Hard (AH)} \cite{arenahard1,arenahard2} to test the model’s generative flexibility and capacity for handling writing tasks. Finally, in the Chat domain, we focus on real-world interaction quality, adopting \textbf{WildBench (WB)} \cite{wildbench} to assess alignment with human intent and \textbf{MT-Bench (MT)} \cite{mtbench} for multiturn dialogue scenarios. Further details can be found in Appendix \ref{benchmark}.

\vspace{-0.2cm}
\paragraph{Implementation Details}

We evaluate our proposed method primarily using \textbf{Qwen2.5-7B-Instruct} \cite{qwenwt} and \textbf{Qwen3-8B} \cite{qwen3}. Notably, for Qwen3-8B, we explicitly suppress the internal reasoning process during both the training and inference stages.
We design eight individual reward metrics covering aspects such as instruction-following, correctness and fluency \cite{advancedif,spct,rm-r1} , which are then combined to form the final reward function.
For our method, we use \texttt{DeepSeek-R1} \cite{deepseekr1} as the reward model, and the full judging prompt are provided in Appendix \ref{prompt}. 
For $\text{GRPO}_\text{rm}$, we adopt \texttt{Skywork-v1-Llama-3.1-8B-v0.2} as the reward model  \cite{skywork}.
Regarding the implementation details, we adopt a learning rate of $1 \times 10^{-6}$, a prompt batch size of $32$, and a group size of $G=8$. For our proposed \ourmethod, the hyperparameters are configured as follows: $\alpha = 0.5$, $\beta = 0.1$, $\mu = 0.1$, and $\eta = 3 $. A comprehensive list of all hyperparameters is provided in Appendix \ref{parameters}.

\subsection{Overall Performance Evaluation}
\paragraph{\ourmethod  improves model performance  across all domains}

Table~\ref{maintable} presents the results across different
methods.
From the table we observe that \ourmethod achieves the highest overall average performance for different model backbones, ranking first in the majority of individual domains. This demonstrates our approach’s capacity to comprehensively bolster model capabilities while ensuring harmonious improvements across diverse domains. In contrast, standard SFT is prone to distribution shift \cite{selfaug} which can lead to a noticeable degradation of general capabilities despite minor gains in chat performance. Other baseline methods such as DPO and various GRPO implementations enhance chat and writing they sometimes struggle to improve or even maintain performance in rigorous tasks such as coding and instruction following. These results suggest that models might overfit to stylistic rewards which potentially erodes core reasoning abilities.

Notably, the results show that $\text{GRPO}_{\text{avg}}$ consistently outperforms both $\text{GRPO}_{\text{rm}}$ and $\text{GRPO}_{\text{imp}}$.
This performance gap suggests that fine-grained reward signals facilitate superior optimization outcomes. Specifically, explicit aggregation provides transparent guidance by decomposing optimization targets \cite{rlcf,openrubircs}.
In contrast, implicit approaches often conflate individual criteria within a monolithic score and consequently obscure specific learning signals. However, despite its competitive performance, $\text{GRPO}_{\text{avg}}$ remains inferior to \ourmethod in harmonizing multi-task capabilities. 
This limitation indicates that static aggregation acts as a performance bottleneck. 
Its rigid and fixed-weighting scheme lacks the sensitivity to prioritize optimization focus based on the real-time progress of different objectives during training. 
Conversely, \ourmethod adaptively re-weights optimization targets by monitoring learning dynamics across training stages, ultimately fostering a synergistic improvement across diverse capabilities.

\begin{figure}
    \centering
    \includegraphics[width=0.99\columnwidth]{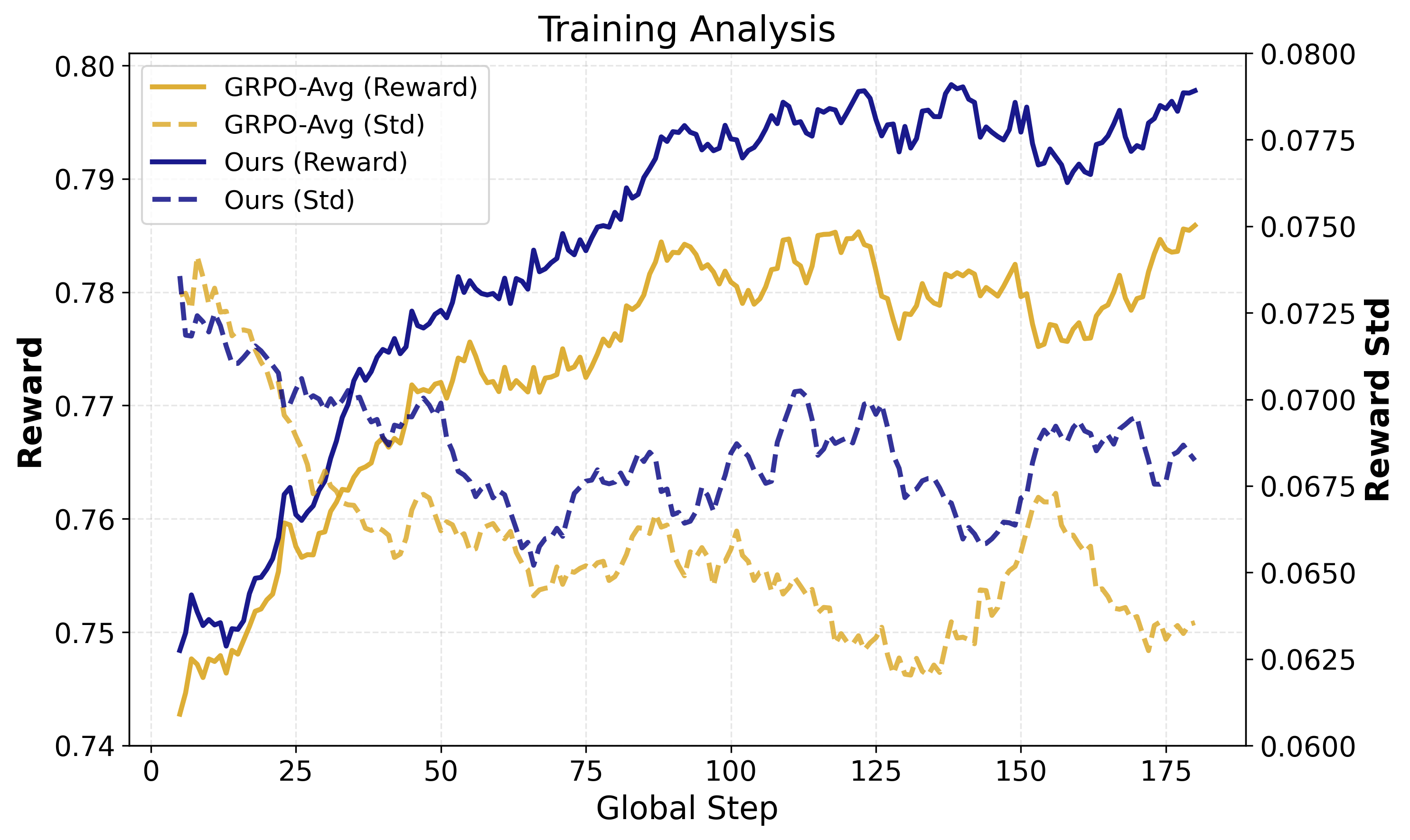}
    \caption{Training trajectories for Qwen2.5-7B-Instruct. To facilitate a clearer comparison of long-term trends and to reduce the visual impact of short-term fluctuations, all curves are smoothed using an exponential moving average (EMA).}
    \label{fig:reward}
\end{figure}

\paragraph{\ourmethod achieves faster and more stable reward improvement.}
 Figure \ref{fig:reward} illustrates the training trajectories of the \textbf{mean reward} and \textbf{standard deviation} for Qwen2.5-7B-Instruct under different training methods.  As shown in the figure, \ourmethod consistently achieves a higher average reward throughout training and exhibits a smaller variance relative to competing approaches, indicating not only stronger overall performance but also improved optimization stability and reduced sensitivity to stochasticity during training.

Detailed trajectories for each individual reward are provided in Figure~\ref {fig:mean_and_std_1} and Figure~\ref{fig:mean_and_std_2}. As illustrated in these figures, we observe pronounced gains in rewards related to creative writing and chat. 
Meanwhile, performance on other metrics remains on par with the $\text{GRPO}_\text{{avg}}$ baseline. 
This discrepancy is likely attributable to the intrinsic subjectivity and open-ended nature of these tasks, which demand imagination and creativity rather than deterministic precision. 
Such capabilities can be easily disproportionately affected when training data or reward signals impose overly rigid constraints.  
Consequently, the rigidity of static weighting makes it difficult to stimulate the full potential of the model in these domains. 
Overall, \ourmethod improves both reward maximization efficiency and training robustness, consistently delivering benefits without sacrificing the model’s broader applicability across a wide range of tasks.

\subsection{Ablation and Further Analysis}

\paragraph{Ablation Studies}
 We conduct an ablation study to validate the contributions of the key components in our framework. 
Specifically, we examined the impact of removing either Progress-Aware Weight Adaptation (PAWA) and Reward-Attributed Data Rebalancing (RADR), which constitute the core mechanisms of SPARD. The results are summarized in Table \ref{ablaton study}.
The experimental results demonstrate the effectiveness of both components. Removing PAWA leads to a significant performance drop in open-ended generation tasks, such as Creative Writing and Chat. This indicates that the dynamic weight adjustment mechanism effectively alleviates the rigid constraints imposed by static weighting strategies, thereby preserving the stylistic diversity and flexibility required for subjective tasks.
Conversely, relying solely on PAWA inhibits the improvement of general capabilities such as instruction following and coding. 
This stagnation arises because, without the reward-attribution mechanism provided by RADR, the model struggles to effectively utilize high-utility training samples (e.g., code data) that align with current optimization objectives (e.g., code generation) during the process of improving general capabilities, consequently hindering their further enhancement. Notably, the performance using either of these methods exceeds the current baseline $\text{GRPO}_{\text{avg}}$, which indicates that \ourmethod possesses strong robustness.

\begin{table}
\centering
\caption{Ablation study on the core mechanisms in \ourmethod. The \textbf{bold} font indicates the best results and an \uline{underline} indicates the second-best results.}
\label{ablaton study}
\scalebox{0.85}{
\begin{tabular}{lccccc} 
\hline
\textbf{Method} & \textbf{IF} & \textbf{GPQA} & \textbf{LCB} & \textbf{CW} & \textbf{MT}  \\ 
\hline
\multicolumn{6}{l}{\textbf{\textit{Qwen2.5-7B-Instruct}}}                                              \\ 
\hline
\ourmethod      & \textbf{75.78}       & \textbf{38.38 }        & \textbf{41.75}        & \textbf{52.49}       & \textbf{81.38}        \\
w/o PAWA        & \uline{74.86}       & \uline{37.88}        & \uline{41.75}        & 51.24       & 80.25        \\
w/o RADR        & 73.56       & 36.36         & 40.00        & \uline{51.87}       & \uline{80.93}        \\ 
\hline
\multicolumn{6}{l}{\textbf{\textit{Qwen3-8B}}}                                               \\ 
\hline
\ourmethod      & \textbf{88.17}       & \uline{49.49}         & \textbf{54.75}        & \textbf{ 73.95 }      & \textbf{78.31}        \\
w/o PAWA        & \uline{87.98}       & \textbf{51.01 }        & \uline{54.50}        &   72.41   & 77.06        \\
w/o RADR        & 86.50       & 47.47         & 52.25        &     \uline{73.15}    & \uline{77.68}        \\
\hline
\end{tabular}

}
\end{table}

\paragraph{\ourmethod is effective for different size of models}
We conduct experiments on Qwen2.5 Instruct models at multiple scales, and the results are reported in Table \ref{modelsize}. 
SPARD demonstrates strong scalability across different model sizes and consistently outperforms both the base model and the static aggregation baseline $\text{GRPO}_{\text{avg}}$ in overall evaluations. 
For smaller models (e.g., 3B and 7B), SPARD achieves broad and stable improvements. 
This suggests that multiple capabilities can benefit simultaneously during the training of smaller-scale models. 
For larger models, the gains become more selective and primarily manifest in challenging domains such as scientific reasoning and multi-turn dialogue. 
Meanwhile, performance on relatively saturated capabilities, including instruction following and code generation remains comparable to $\text{GRPO}_{\text{avg}}$. 
These results indicate that as model capacity increases, SPARD adaptively allocates optimization focus according to learning progress to maintain robust training benefits across scales.

\paragraph{Learning dynamics} 
Detailed changes in reward weights and data importance are documented in Appendix \ref{change}. 
As illustrated in Figures, reward weights undergo continuous adjustments throughout the training process to adaptively balance different objectives based on real-time feedback. 
From a data perspective, figure \ref{data change}  shows that text transformation tasks receive the highest initial weight and yield immediate gains, reflecting the rapid acquisition of instruction-following abilities. 
In contrast, the weight for code-related data peaks early and then declines, suggesting that the optimization focus shifts once the model achieves proficiency in code reasoning. 
As training progresses into the middle and late stages, knowledge QA and creative writing exhibit an upward trend in weight to occupy a larger proportion of the optimization budget. 
This pattern confirms that different capability dimensions exhibit non-stationary dynamics. 
These findings align with recent studies \cite{rar,yin2024entropylawstorydata} suggesting that verifiable tasks like coding and constrained tasks are learned earlier. 
Subjective tasks such as long-form QA and creative writing require sustained optimization due to their inherent flexibility.

\begin{table}
\centering
\caption{Model Performance Across Different Sizes. \texttt{Avg} refers to the method $\text{GRPO}_{\text{avg}}$, which averages rewards across dimensions.}
\label{modelsize}
\scalebox{0.75}{
\begin{tblr}{
  column{3} = {c},
  column{4} = {c},
  column{5} = {c},
  column{6} = {c},
  column{7} = {c},
  cell{2}{1} = {r=3}{c},
  cell{5}{1} = {r=3}{c},
  cell{8}{1} = {r=3}{},
  hline{1-2,5,8,11} = {-}{},
}
                      & \textbf{Method}         & \textbf{IF} & \textbf{GPQA} & \textbf{LCB} & \textbf{CW} & \textbf{MT} \\
\textit{\textbf{3b}}  & Base                    & 60.07            & 27.78              & 27.00           & 40.17     &    73.37      \\
                      & +Avg             & 64.51         &      30.81      & 26.75          & 43.41           &  73.68       \\
                      & \textit{\textbf{+Ours}} & \textbf{65.24}          & \textbf{31.31}            &  \textbf{28.50 }       & \textbf{44.16 }          &  \textbf{74.25 }       \\
\textit{\textbf{14b}} & Base                    &  78.03           & \textbf{46.97 }           &    46.50      & 55.80          & 79.84          \\
                      & +$\text{Avg}$             & 79.48           & 45.45             & \textbf{47.50}            & 60.02        & 82.25           \\
                      & \textit{\textbf{+ Ours}} & \textbf{80.96}           & \textbf{46.97}             & 47.25            & \textbf{60.63}           &    \textbf{84.88 }     \\
\textit{\textbf{32b}} & Base                    & 80.22       &  47.47            & 55.25              &  56.07        &   84.68         \\
                      & +Avg            & \textbf{81.70}           & 48.99             & \textbf{56.25  }          & 59.35          &  85.31       \\
                      & \textit{\textbf{+Ours}} & 80.59           &  \textbf{49.49 }           &  55.75           & \textbf{60.81 }          & \textbf{86.00 } 
\end{tblr}
}
\end{table}

\section{Conclusion}


In this work, we proposed \ourmethod, a self-paced RL framework that orchestrates a dynamic alignment curriculum.
By coupling reward dynamics with adaptive data rebalancing, SPARD resolves the inefficiencies inherent in static multi-objective optimization. Our extensive evaluation shows that SPARD consistently enhances model performance across diverse tasks while ensuring training stability. These findings underscore the necessity of progress-aware scheduling in complex alignment scenarios. For future work, we aim to generalize this framework to multimodal domains, further exploring the potential of automated curriculum learning in scaling post-training.

\section{Limitations}

While \ourmethod establishes a robust RL framework for dynamic alignment in open-ended scenes, two limitations warrant consideration. First, the framework relies on high-capability LLMs as reward judges. While this ensures alignment with complex human preferences, it introduces significant inference latency and computational overhead during the online RL loop, potentially constraining scalability and training throughput. Secondly, the current reward aggregation remains a linear approximation. This formulation may oversimplify the optimization landscape, failing to capture the intricate, nonlinear interdependencies among conflicting objectives. Future research should investigate more expressive, non-linear aggregation mechanisms to better navigate these complex relationships.

\bibliography{ref}

\newpage
\appendix

\section{Appendix}

\subsection{Proofs}
\begin{proposition}[Optimal Reward Weight Update]
\label{prop:optimal_weight_update}
Given the reliable performance gain vector $Q_t \in \mathbb{R}^n$ and the current weight distribution $w_t^r \in \Delta_{n-1}$, the closed-form solution to the regularization-constrained optimization problem defined in Eq.(7):
\begin{equation}
\label{eq:opt_problem}
w_{t+1}^r = \underset{w \in \Delta_{n-1}}{\arg\max} \left( Q_t^\top w - \frac{1}{\eta} D_{KL}(w \parallel w_t^r) \right)
\end{equation}
is given by the exponentiated gradient update rule:
\begin{equation}
w_{t+1}^{r,i} = \frac{w_{t,i}^r \exp(\eta Q_t^i)}{\sum_{j=1}^n w_{t,j}^r \exp(\eta Q_t^j)}
\end{equation}
\end{proposition}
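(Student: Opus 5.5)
The plan is to treat the problem as maximizing a strictly concave function over the simplex and to solve it in two ways: by Lagrange multipliers, and by a direct Gibbs-variational (KL) argument that certifies global optimality. Write the objective as
\[
\Phi(w) = \sum_{i=1}^n Q_t^i w_i - \frac{1}{\eta}\sum_{i=1}^n w_i \log\frac{w_i}{w_{t,i}^r}.
\]
Since $w_t^r$ has strictly positive entries (it is initialized uniform and every update has the claimed multiplicative form, so positivity propagates by induction), $\Phi$ is finite and continuous on the compact set $\Delta_{n-1}$. It is also strictly concave, because $w\mapsto w\log w$ is strictly convex and $\eta>0$. A maximizer therefore exists and is unique.

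\textbf{Step 1 (stationarity).} Introduce a multiplier $\lambda$ for the constraint $\sum_i w_i=1$. For now, ignore the constraints $w_i\ge 0$; Step 2 handles them. Setting $\partial_{w_i}\big(\Phi(w)-\lambda(\sum_j w_j-1)\big)=0$ gives
\[
Q_t^i-\frac{1}{\eta}\Big(\log\frac{w_i}{w_{t,i}^r}+1\Big)-\lambda=0,
\]
so $w_i = w_{t,i}^r\exp(\eta Q_t^i)\exp(-1-\eta\lambda)$. Imposing $\sum_i w_i=1$ fixes $\exp(-1-\eta\lambda)=1/Z$ with $Z=\sum_j w_{t,j}^r\exp(\eta Q_t^j)$. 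This yields exactly the stated exponentiated-gradient rule. The resulting candidate has strictly positive entries, so the constraints $w_i\ge 0$ are inactive there.

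\textbf{Step 2 (global optimality and boundary).} This is the one point that needs care, because the entropy term is not differentiable on the boundary of the simplex, so stationarity alone does not rule out a boundary maximizer. To settle it cleanly, define $w^\star$ as the claimed solution and compute, for any $w\in\Delta_{n-1}$, using $\log(w^\star_i/w_{t,i}^r)=\eta Q_t^i-\log Z$:
\[
\Phi(w)=\frac{1}{\eta}\log Z-\frac{1}{\eta}D_{KL}(w\parallel w^\star).
\]
Gibbs' inequality gives $D_{KL}(w\parallel w^\star)\ge 0$, with equality iff $w=w^\star$. Hence $w^\star$ is the unique global maximizer, with optimal value $\frac{1}{\eta}\log Z$. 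This identity, with the convention $0\log 0=0$ handling boundary points, makes Step 1 essentially a heuristic derivation, and the rigorous content rests on this one-line rearrangement.
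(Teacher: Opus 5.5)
Your proof is correct, and your Step~1 is exactly the paper's whole argument. The paper expands the KL term, asserts strict concavity, attaches a multiplier only to $\sum_i w_i=1$, dismisses $w_i\ge 0$ by saying the logarithm ``acts as a barrier,'' and normalizes the stationary point.

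Your Step~2 is a genuinely different certificate. Substituting $\eta Q_t^i=\log(w_i^\star/w_{t,i}^r)+\log Z$ rewrites the objective as $\Phi(w)=\tfrac{1}{\eta}\log Z-\tfrac{1}{\eta}D_{KL}(w\parallel w^\star)$. Gibbs' inequality alone then gives existence, uniqueness, global optimality on the closed simplex, and the optimal value $\tfrac{1}{\eta}\log Z$. It needs no differentiability, no multipliers, and not even concavity; the Lagrangian only serves to guess $w^\star$.

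One correction to your motivation: once concavity is established, stationarity already excludes a boundary maximizer, so the paper's route is not merely heuristic. View $\Phi$ as concave on the nonnegative orthant and differentiable at the interior point $w^\star$, where $\nabla\Phi(w^\star)=\tfrac{1}{\eta}(\log Z-1)\mathbf{1}$. The supergradient inequality then gives $\Phi(w)\le\Phi(w^\star)+\nabla\Phi(w^\star)^\top(w-w^\star)=\Phi(w^\star)$ for every $w\in\Delta_{n-1}$, boundary points included, since $w-w^\star$ sums to zero. This one line is what the paper's barrier remark should have said.

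Both you and the paper also tacitly need $w_{t,i}^r>0$. For a general $w_t^r\in\Delta_{n-1}$ your identity still works on the support of $w_t^r$. Any $w$ with mass off that support has $D_{KL}(w\parallel w_t^r)=+\infty$, and the formula sets $w_i^\star=0$ there.
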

\begin{proof}
Let $J(w)$ denote the objective function. Expanding the KL-divergence term, the objective is formulated as:
\begin{equation}
J(w) = \sum_{i=1}^n Q_t^i w_i - \frac{1}{\eta} \sum_{i=1}^n w_i \ln \left( \frac{w_i}{w_{t,i}^r} \right)
\end{equation}
The negative relative entropy term is strictly concave with respect to $w$. Consequently, the optimization problem is strictly concave over the probability simplex $\Delta_{n-1}$, guaranteeing the existence of a unique global maximum.

To derive the optimal solution, we construct the Lagrangian $\mathcal{L}(w, \lambda)$ to enforce the simplex constraint $\sum_{i=1}^n w_i = 1$. The non-negativity constraints $w_i > 0$ are implicitly satisfied by the domain of the logarithmic term (acting as a barrier function). The Lagrangian is given by:
\begin{equation}
\begin{split}
\mathcal{L}(w, \lambda) &= \sum_{i=1}^n Q_t^i w_i - \frac{1}{\eta} \sum_{i=1}^n \left( w_i \ln w_i - w_i \ln w_{t,i}^r \right) \\
&\quad + \lambda \left( \sum_{i=1}^n w_i - 1 \right)
\end{split}
\end{equation}
Where $\lambda \in \mathbb{R}$ is the Lagrange multiplier associated with the equality constraint.

Taking the partial derivative with respect to $w_i$:
\begin{equation}
\begin{split}
\frac{\partial \mathcal{L}}{\partial w_i} &= Q_t^i - \frac{1}{\eta} \left( \ln w_i + 1 - \ln w_{t,i}^r \right) + \lambda \\
&= Q_t^i - \frac{1}{\eta} \ln \left( \frac{w_i}{w_{t,i}^r} \right) - \frac{1}{\eta} + \lambda 
\end{split}
\end{equation}
Rearranging the terms to isolate $\ln w_i$, we obtain:
\begin{equation}
\begin{aligned}
\frac{1}{\eta} \ln \left( \frac{w_i}{w_{t,i}^r} \right) &= Q_t^i + \lambda - \frac{1}{\eta} \\
\ln \left( \frac{w_i}{w_{t,i}^r} \right) &= \eta Q_t^i + (\eta \lambda - 1)
\end{aligned}
\end{equation}
Exponentiating both sides yields the functional form of the optimal weights:
\begin{equation}
\label{eq:functional_form_final}
w_i = w_{t,i}^r \exp(\eta Q_t^i) \cdot \exp(\eta \lambda - 1)
\end{equation}
Let $Z = \exp(\eta \lambda - 1)$ denote the normalization constant, which is independent of the index $i$. To determine $Z$, we enforce the probability constraint $\sum_{j=1}^n w_j = 1$:
\begin{equation}
\sum_{j=1}^n w_j = Z \sum_{j=1}^n w_{t,j}^r \exp(\eta Q_t^j) = 1
\end{equation}
Solving for $Z$, we find:
\begin{equation}
Z = \frac{1}{\sum_{j=1}^n w_{t,j}^r \exp(\eta Q_t^j)}
\end{equation}
Substituting $Z$ back into Eq.~\eqref{eq:functional_form_final}, we arrive at the closed-form update rule:
\begin{equation}
w_{t+1}^{r,i} = \frac{w_{t,i}^r \exp(\eta Q_t^i)}{\sum_{j=1}^n w_{t,j}^r \exp(\eta Q_t^j)}
\end{equation}
\end{proof}

\subsection{Dataset Description}
\label{data construct}
\subsubsection{Source and Composition}
The WildChat-IF dataset utilized in this work is derived from the WildChat corpus. The original WildChat corpus comprises approximately 1 million user-chatbot conversations consisting of over 2.5 million interaction turns, collected from a publicly available service powered by GPT-3.5 and GPT-4 APIs. We filtered and categorized the samples into four distinct types: Creative Writing (CW),  Text Transformation (Text), Code Generation (Code), and Knowledge QA (QA). The final dataset comprises a total of 5,760 samples. As illustrated in Figure \ref{fig:data_dist}. 
\begin{figure}[h]
    \centering
    \includegraphics[width=0.9\linewidth]{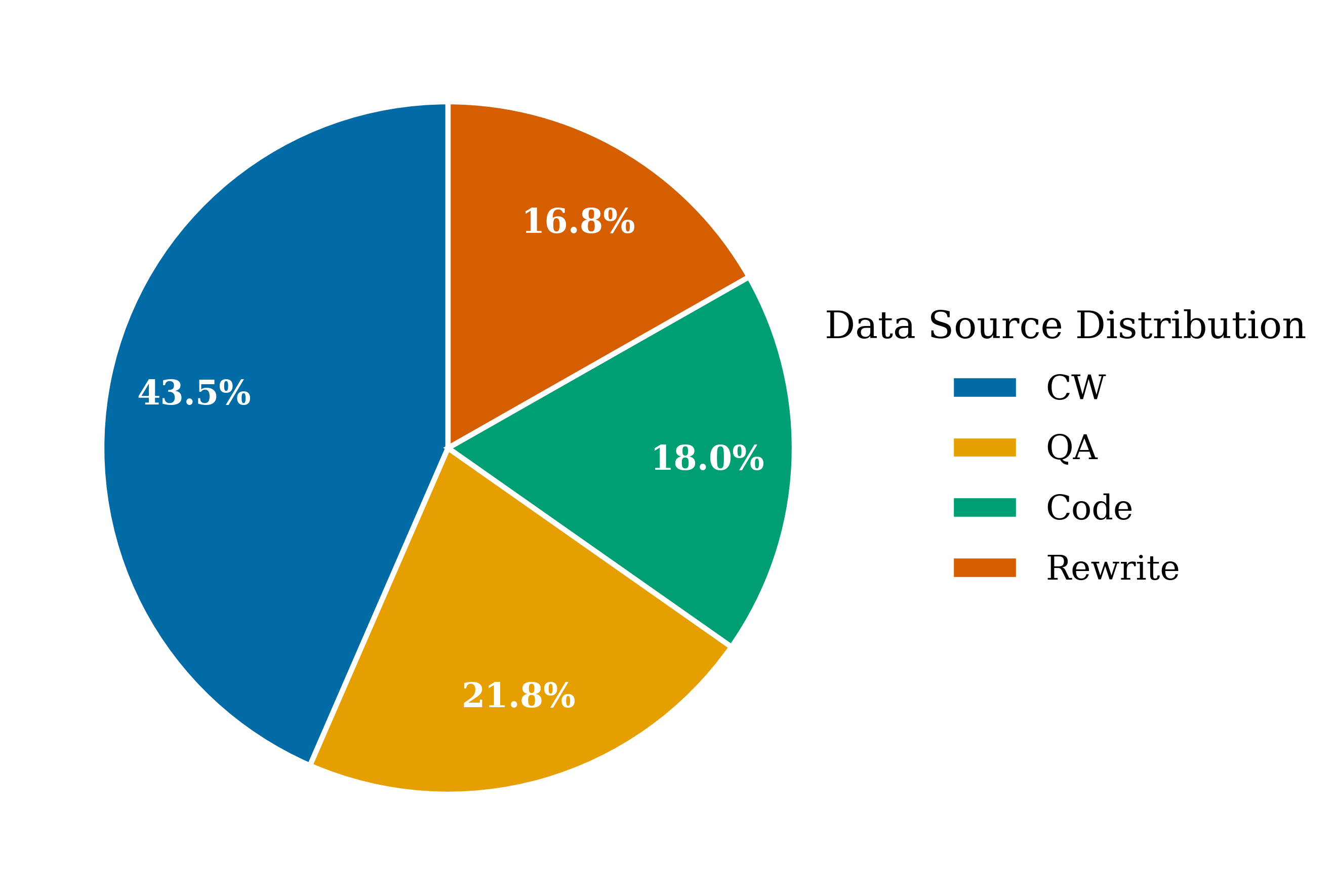}
    \caption{Distribution of selected data sources in the WildChat-IF dataset.}
    \label{fig:data_dist}
\end{figure}
\subsubsection{Data Construction} 
We employed DeepSeek-R1~\cite{deepseekr1}, a state-of-the-art language model, to process the raw data. Specifically, DeepSeek-R1 was utilized to categorize the samples based on their semantic intent. The specific prompts designed for this curation and classification process are detailed in Prompt ~\ref{prompt:user_query_classification}

\subsection{Evaluation Details}
\label{benchmark}
We evaluate SPARD using a suite of seven benchmarks organized into three distinct domains: General Capability, Creative Writing, and Chat. 

\subsubsection{General Capability}
This category targets reasoning and constraint adherence, encompassing verifiable instruction following, code generation, and domain-specific scientific knowledge. We use OpenCompass in GPQA and LCB.
\begin{itemize}
    \item Instruction-Following Evaluation (IFEval) \cite{ifeval}: IFEval assesses the objective ability of models to adhere to strict execution constraints. The dataset comprises approximately 500 prompts covering 25 types of verifiable instructions, such as word count limits and formatting requirements. Unlike model-based judges, IFEval employs programmatic metrics to calculate deterministic constraint satisfaction. We use the \textbf{prompt-level loose accuracy} metric for our result.
    \item Graduate-Level Google-Proof Q\&A (GPQA) \cite{gpqa}: GPQA contains 448 high-difficulty multiple-choice questions spanning biology, physics, and chemistry. Authored by PhD-level domain experts, these questions are designed to be "Google-proof" to resist simple retrieval. It serves as a rigorous test for expert-level reasoning and deep domain knowledge.  We use GPT-4o (version: 2024-06-01) as the judge model to calculate the \textbf{accuracy}.
    \item LiveCodeBench  (LCB) \cite{livecodebench}: LiveCodeBench evaluates code generation on contest problems published after the model's training cutoff. The benchmark measures performance via functional correctness (Pass@1) on hidden test cases, ensuring the model generalizes to novel algorithmic problems rather than recalling memorized solutions.  We report results on the \textbf{Code Generation} \textbf{ scenario}.
    
\end{itemize}
\subsubsection{Creative Writing}
This domain stresses the model's generative flexibility, evaluating its capacity to handle complex, open-ended tasks that demand stylistic nuance and high-entropy output
\begin{itemize}
    \item Creative Writing v3 (CW) \cite{cwv3}: CW consists of 32 open-ended prompts designed to elicit nuanced literary output. Responses are evaluated by a strong judge model using a criterion focused on narrative flow and emotional depth. The scoring mechanism is specifically calibrated to minimize length bias and assess subjective quality.  We use Claude-3.7 as the judge model to calculate the \textbf{judge score}.
    \item Arena-Hard V2.0(AH) \cite{arenahard1,arenahard2}: AH utilizes 500 challenging prompts curated from the Chatbot Arena, selected for their high separability. The evaluation employs a pairwise comparison mechanism where a judge model compares the target against a baseline. The resulting win-rates correlate highly (98.6\%) with human preference rankings, serving as a proxy for performance on complex queries. We use GPT-4o as the judge model to calculate the \textbf{win rate} in the \textbf{creative writing} subset and the baseline model is GPT-o1.
\end{itemize}
\subsubsection{Chat}
This category focuses on real-world interaction quality, assessing robustness against diverse, noisy user intents and the maintenance of coherence across multi-turn dialogues. 
\begin{itemize}
    \item MT-Bench (MT) \cite{mtbench}: MT-Bench assesses conversational flow and instruction following through 80 high-quality multi-turn questions across eight domains. Each task involves a two-turn dialogue to test context retention. A \textbf{judge grades} responses on a scale of 1 to 10 based on helpfulness, relevance, and accuracy. We use GPT-4o as the judge model. 
    \item WildBench (WB) \cite{wildbench}: Derived from the WildChat corpus, WildBench evaluates models on 1,024 real-world tasks that reflect diverse and noisy user interactions. It uses fine-grained, checklist-based pairwise comparisons (WB-Reward/Score) to assess practical utility across use cases like debugging and information seeking. We use GPT-4o as the judge model and use the \textbf{WB-Reward} as our metric.
\end{itemize}
\subsection{Additional Results}
\subsubsection{Training Reward Definition}
To comprehensively evaluate the quality of generated content during the training process and to provide stable learning signals, we define eight reward functions to assess different dimensions of quality, including correctness, detail, tune, logic, relevance, instruction following, and structure. Specifically, we employ the Deepseek-R1 model as our judge model, utilizing carefully designed scoring prompts to enable the model to evaluate responses across these various dimensions. The detailed prompts are provided in Appendix \ref{prompt}.

\subsubsection{Training Reward Analysis}
\label{all_rewards}
We analyze the training dynamics of Qwen2.5-7B-instruct across eight specific reward dimensions. Figure~\ref{fig:mean_and_std_1} and Figure~\ref{fig:mean_and_std_2} present the training curves, detailing the mean reward and its corresponding standard deviation (Std.) throughout the optimization process.
\begin{figure*}[h]
    \centering
    \includegraphics[width=0.9\linewidth]{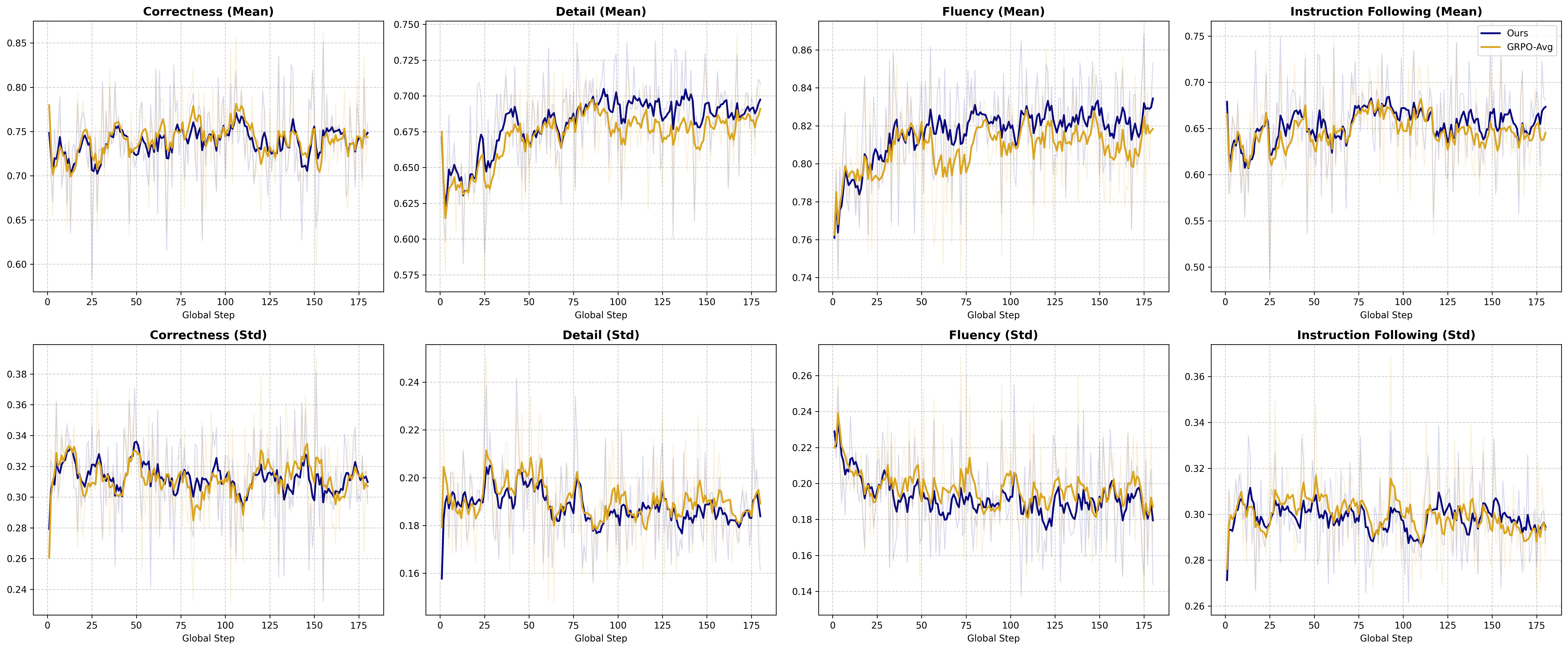}
    \caption{Training dynamics for Correctness, Detail, Fluent, Instruction Following.}
    \label{fig:mean_and_std_1}
\end{figure*}
\begin{figure*}[h]
    \centering
    \includegraphics[width=0.9\linewidth]{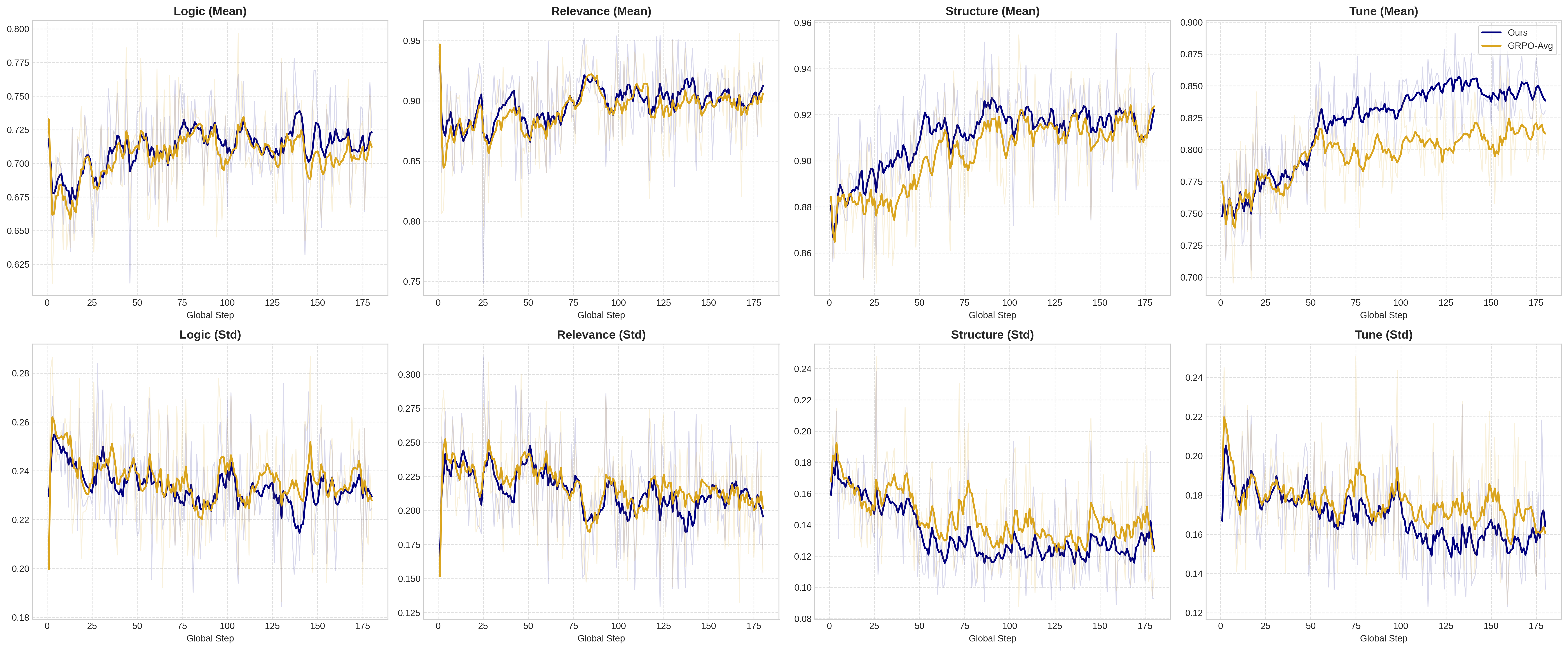}
    \caption{Training dynamics for Logic, Relevant, Structure, and Tune.}
    \label{fig:mean_and_std_2}
\end{figure*}

\par Overall, our method consistently outperforms the GRPO-Avg baseline across all evaluated metrics, demonstrating both higher reward acquisition and improved stability. Specifically: 
\begin{itemize} \item \textbf{Convergence and Performance:} As shown in Figures \ref{fig:mean_and_std_1} and \ref{fig:mean_and_std_2}, our method exhibits faster convergence rates. It achieves higher mean scores during the training process, particularly in the \textit{Structure}, \textit{Tune}, and \textit{Fluency} dimensions, suggesting a more effective alignment with the reward objectives.
\item \textbf{Training Stability:} The standard deviation plots (right columns) reveal that our method generally maintains lower or comparable variance throughout the training process. Notably, in both figures, the reduction in std implies that our policy optimization is less prone to mode collapse or instability, leading to more consistent generation quality. 
\end{itemize}

\subsubsection{Training dynamics During Training}

\label{change}
We analyzed the training dynamics during the training process, including the variation trends of reward weights and data weights. The figure \ref{data change}, \ref{reward weight change} below shows the training curves, where the changing trends of both reward weights and data weights throughout training can be observed. This demonstrates that our approach is capable of capturing the model’s continuously evolving learning progress.

\begin{figure*}
    \centering
    \includegraphics[width=0.9\linewidth]{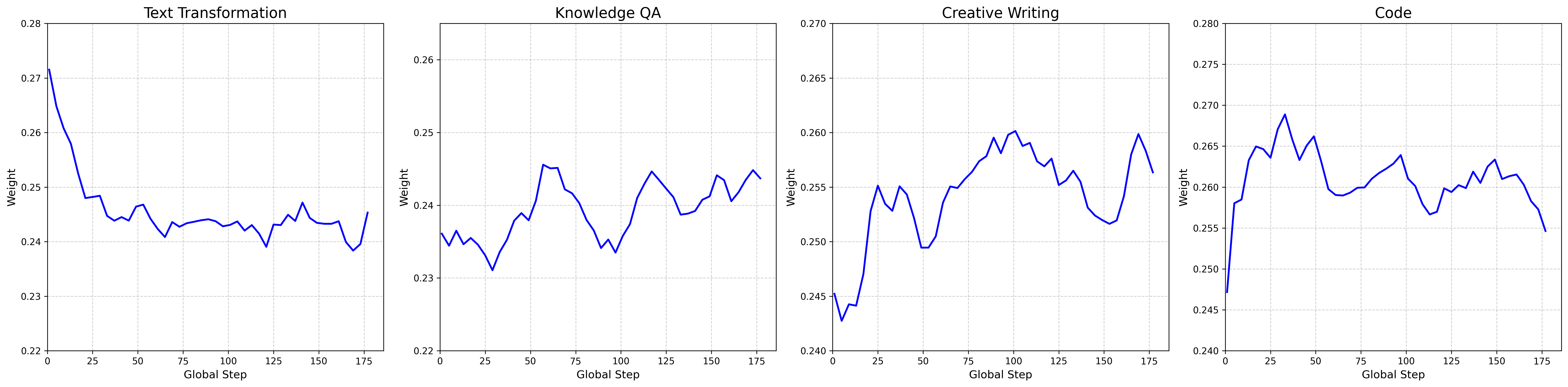}
    \caption{Data Weight Change During Training Process}
    \label{data change}
\end{figure*}

\begin{figure*}
    \centering
    \includegraphics[width=0.9\linewidth]{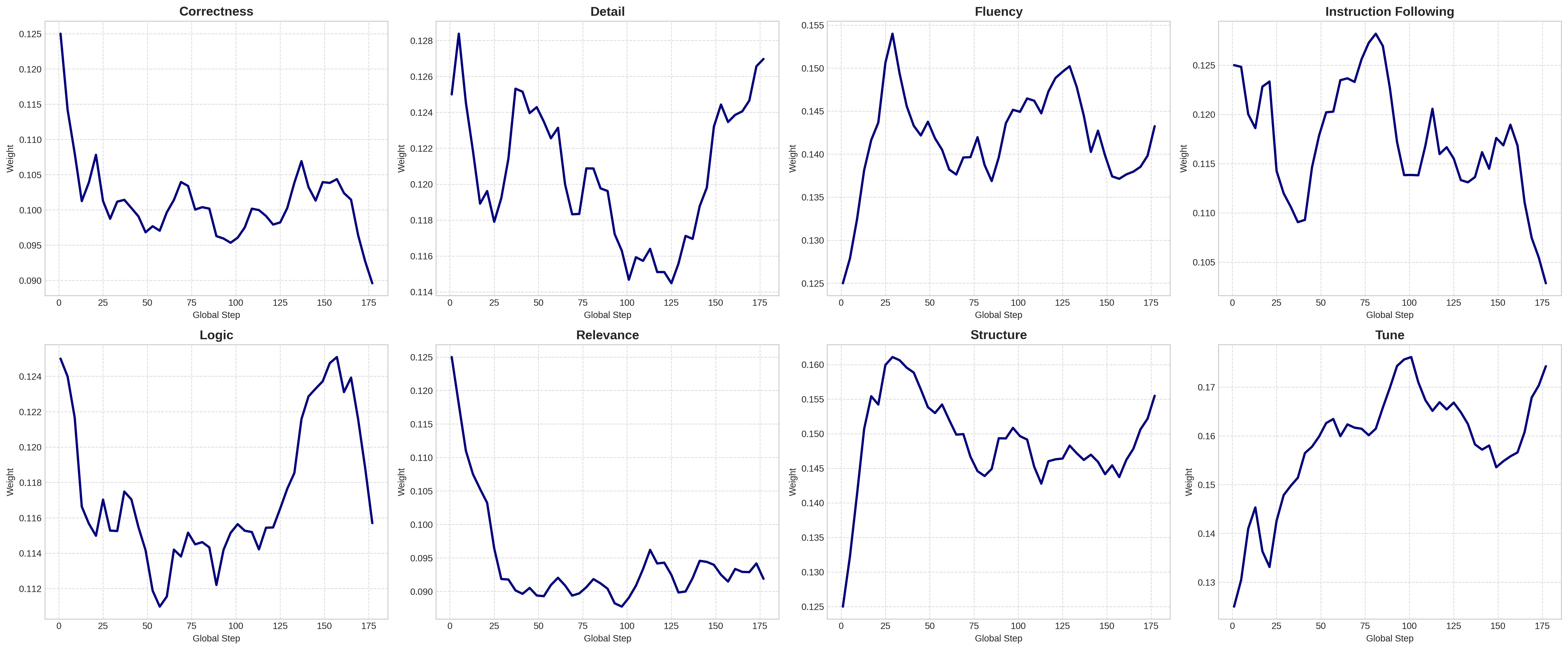}
    \caption{Reward Weight Change During Training Process}
    \label{reward weight change}
\end{figure*}

\subsubsection{Implementation Details}
\label{parameters}
We provide the corresponding hyperparameters for SFT, DPO  and GRPO in Table \ref{tab:sft}, \ref{tab:dpo}, \ref{tab:grpo}. All the training is conducted based on the ms-swift \cite{swift} framework.
\begin{table}[!h]

\centering
\caption{Supervised fine-tuning (SFT) hyperparameters used in our experiments.}
\label{tab:sft}
\begin{tabular}{lc} 
\toprule
Hyperparameter & Value  \\ 
\midrule
Batch size      & 32      \\
Epochs      & 1     \\
Learning rate   & 5e-6      \\
Warmup ratio      & 0.05       \\
Weight decay      & 0.1     \\
Adam betas      & (0.9, 0.95)      \\
LR scheduler    &    cosine    \\
\bottomrule
\end{tabular}
\end{table}
\begin{table}[!h]

\centering
\caption{Direct Preference Optimization (DPO).}
\label{tab:dpo}
\begin{tabular}{lc} 
\toprule
Hyperparameter & Value  \\ 
\midrule
Batch size      & 32      \\
Epochs      & 1     \\
Learning rate   & 1e-6      \\
DPO $\beta$  & 0.1  \\
Warmup ratio      & 0.05       \\
Weight decay      & 0.1     \\
Adam betas      & (0.9, 0.95)      \\
LR scheduler    &    cosine    \\
\bottomrule
\end{tabular}
\end{table}

\begin{table}[!h]
\centering
\caption{hyperparameters of GRPO and our method}
\label{tab:grpo}
\begin{tabular}{lc} 
\toprule
Hyperparameter & Value  \\ 
\midrule
Batch size      & 32      \\
Epochs      & 1     \\
Learning rate   & 1e-6      \\
Warmup ratio      & 0.05       \\
Weight decay      & 0.1     \\
Adam betas      & (0.9, 0.95)      \\
LR scheduler    &    cosine    \\
Group size    &   8 \\
KL coefficient      & 0.04      \\
Generation temperature      & 0.7    \\
Judge model     & Deepseek r1      \\
Judge emperature      & 0.3    \\
\ourmethod $\alpha$     & 0.5     \\
\ourmethod $\beta$     & 0.1     \\
\ourmethod $\mu$     & 0.1    \\
\ourmethod $\eta$     & 3     \\
\bottomrule
\end{tabular}
\end{table}

\subsection{Computational Cost Analysis}
\ourmethod operates within the standard GRPO framework, introducing dynamic scheduling for reward and data weights.\cite{lv2026costeercollaborativedecodingtimepersonalization} The core mechanisms, Progress-Aware Weight Adaptation (PAWA) and Reward-Attributed Data Rebalancing (RADR), rely solely on statistical aggregates (e.g., EMA and MAD) of the generated rewards. These operations are computationally negligible compared to the policy model's forward and backward passes. Unlike curriculum learning approaches that require external difficulty annotators or separate pre-sorting stages, SPARD adapts online without requiring additional inference calls. Consequently, the computational cost of SPARD remains strictly equivalent to standard Multi-Reward GRPO, while significantly improving optimization efficiency.

\subsection{Prompts}
\label{prompt}
We provide a comprehensive breakdown of the prompt protocols utilized for both data construction and evaluation to ensure full reproducibility. \cite{zhang2025killingbirdsstoneunifying,wang2025generativelargerecommendationmodels,ye2025fuxialphascalingrecommendationmodel,Wang_2025,Zhang_2025}

\par To facilitate granular analysis, Prompt~\ref{prompt:user_query_classification} stratifies user queries into four distinct taxonomies: Creative Writing, Question Answering, Code Generation, and  \cite{gu2025rapidefficientretrievalaugmentedlong}. 

\par For performance assessment, Prompt~\ref{prompt:Evaluation_Prompt} are employed to quantify response quality comprehensively($\text{GRPO}_\text{imp}$).  In this unified prompt structure, we have consolidated the detailed grading rubrics into a single variable placeholder: \texttt{\{\{all criteria\}\}}. When the model executes this prompt, it will reference the full set of injected criteria to perform a holistic evaluation and generate a comprehensive score based on the combined weights of these standards.
\par We consolidated eight detailed grading rubrics (including Correctness, Relevance, Level of Detail, Fluency, Logical Flow, Instruction Adherence, Structure, and Tone). Additionally, we provide a representative example, Prompt~\ref{prompt:level_of_detail}, alongside a unified template, Prompt~\ref{prompt:general_prompt}, for other unlisted tasks. You can replace \{\{DIMENSION\}\} (evaluation dimensions), \{\{FOCUS\_AREA\}\} (areas of focus), and \{\{LEVEL\_...\}\} (specific scoring criteria) in the following template with the actual task content.

\begin{promptbox*}[label=prompt:user_query_classification]{User Query Classification System}

\small 
\textbf{[Task Description]}\\
You are an AI assistant responsible for analyzing user queries. Your objective is to classify the user's input into one of four distinct categories based on the content and intent.

\vspace{0.5em}
\textbf{[Classification Criteria]}
\begin{itemize}
    \setlength\itemsep{0pt}
    
    \item \textbf{0. Code \& Software Engineering:}
    \begin{itemize}
        \item Writing code in specific languages (Python, Java, Kotlin, C++, JavaScript, SQL, HTML, CSS, etc.).
        \item Refactoring, compressing, or optimizing existing code.
        \item Creating programming exercises or coding problems.
        \item Explaining code logic, debugging, performance tuning, or API design.
    \end{itemize}

    \item \textbf{1. Knowledge QA \& Problem Solving:}
    \begin{itemize}
        \item STEM questions (Math, Statistics, Physics, Chemistry, Biology calculations).
        \item Exam creation/solving: MCQs, True/False, practice problems; providing answers or validating options.
        \item Analyzing, summarizing, or explaining the content of books or articles (e.g., literary genre analysis).
        \item General knowledge retrieval: listing examples, defining concepts, comparing product specifications or data.
    \end{itemize}

    \item \textbf{2. Language \& Text Rewriting:}
    \begin{itemize}
        \item Translation, paraphrasing, polishing, or grammar correction.
        \item Simplification, summarization, expansion, or shortening of text.
        \item Sentence construction based on specific grammar structures.
        \item Adjusting writing style, tone, or register.
    \end{itemize}

    \item \textbf{3. Creative Writing \& Application Scenarios:}
    \begin{itemize}
        \item Creative fiction: stories, novel chapters, character design, world-building, fan fiction, scripts, humor.
        \item Content creation: articles, speeches, movie reviews, reflections, blog posts, magazine entries.
        \item Marketing: Copywriting, headlines, slogans, product descriptions, social media posts.
        \item Generating prompts for image generation models (e.g., Midjourney).
        \item Role-play, simulated dialogues, or scenario writing.
        \item Brainstorming, ideation, and list generation requests.
    \end{itemize}
\end{itemize}

\vspace{0.5em}
\textbf{[User Query]} \\
\texttt{\{\{prompt\}\}}

\vspace{0.5em}
\textbf{[Output Format]}\\
Based on the criteria above, determine the type of the User Query and explain your reasoning. The output must be a JSON object containing the evaluation result and should not contain any other text. The \texttt{category} field must be an integer (0 for Code, 1 for Knowledge QA, 2 for Language/Text, 3 for Creative/Scenarios).

\begin{verbatim}
{
"reason": <string>,   # The reasoning behind your classification
"category": <int>     # The classification result (0, 1, 2, or 3)
}
\end{verbatim}
\end{promptbox*}

\begin{promptbox*}[label=prompt:Evaluation_Prompt]{Evaluation Prompt: Comprehensive Quality}
\small 
\textbf{[Task Description]}\\
You are an expert evaluator. Given a user query and a generated response, please rate the overall quality of the generated response on a scale of 0 to 5 based on how well the response is.

\vspace{0.5em}
\textbf{[Scoring Rules]}
\begin{itemize}
    \setlength\itemsep{0pt}
    \item Provide reasons for each score by indicating specific strengths or deficiencies within the Response. Reference exact text passages to justify the score.
    \item Be very STRICT and do not be misled by format or length.
    \item Based on the general evaluation criteria, state the weights of different criteria, and then provide an overall comprehensive score based on them.
    \item Consider all criteria holistically when determining your score.
\end{itemize}

\vspace{0.5em}
\textbf{[Criteria]}

\texttt{\{\{all criteria\}\}}

\vspace{0.5em}
\textbf{[User Query]} \\
\texttt{\{\{prompt\}\}}

\vspace{0.5em}
\textbf{[Response]} \\
\texttt{\{\{response\}\}}

\vspace{0.5em}
\textbf{[Output Format]}\\
The output should be a JSON object containing the evaluation results for the criterion, and should not contain any other text.
\begin{verbatim}
{
  "reason": <string>,  # A detailed rationale substantiating the judgment
  "score": <integer>   # The assigned score (0–5)
}
\end{verbatim}
\end{promptbox*}

\vspace{1em}
\noindent

\begin{promptbox*}[label=prompt:level_of_detail]{Evaluation Prompt: Level of Detail}
\small
\textbf{[Task Description]}\\
You are to evaluate the level of detail of the response to the user query. Focus exclusively on assessing coverage, specificity, necessary context, and actionable specifics using the criteria below.

\vspace{0.5em}
\textbf{[Criteria]}
\begin{itemize}
    \setlength\itemsep{0pt} 
    \item \textbf{Exceptional Detailed (5 points):} Complete coverage of all relevant aspects; highly specific; includes necessary context, assumptions, edge cases, and clear, actionable steps/examples.
    \item \textbf{Very Detailed (4 points):} Strong coverage with minor omissions or shallow spots; mostly specific and actionable with adequate context.
    \item \textbf{Adequately Detailed (3 points):} Covers the main aspects but leaves notable gaps; some specifics/actionable elements present, limited context or edge cases.
    \item \textbf{Partially Detailed (2 points):} Uneven coverage with significant gaps; description is general/vague in many places; lacks sufficient context or actionability.
    \item \textbf{Poorly Detailed (1 point):} Minimal coverage; mostly generic statements; little to no actionable or contextual information.
    \item \textbf{Not Detailed (0 points):} Very brief, off-topic, or lacks assessable detail.
\end{itemize}

\vspace{0.5em}
\textbf{[Scoring Rules]}
\begin{itemize}
    \setlength\itemsep{0pt}
    \item Provide reasons for each score by indicating specific strengths or deficiencies within the Response. Reference exact text passages to justify the score, ensuring that each reason is concrete and aligns with the criteria requirements while highlighting key gaps from the ideal answer.
    \item Be very STRICT and do not be misled by format or length; ensure that the Response is thoroughly evaluated beyond superficial appearances.
    \item Scoring Range: Assign an integer score between 0 to 5
\end{itemize}

\vspace{0.5em}
\textbf{[User Query]} \\
\texttt{\{\{prompt\}\}}

\vspace{0.5em}
\textbf{[Response]} \\
\texttt{\{\{response\}\}}

\vspace{0.5em}
\textbf{[Output Format]}\\
The output should be a JSON object containing the evaluation results for the criterion, and should not contain any other text.
\begin{verbatim} 
{
"reason": <string>,  # A detailed rationale substantiating the score
"score": <integer>   # An integer from 0 to 5
}
\end{verbatim}
\end{promptbox*}

\begin{promptbox*}[label=prompt:general_prompt]{User Prompt Template: Evaluation of \{\{DIMENSION\}\}}
\small
\textbf{[Task Description]}\\
You are to evaluate the \{\{DIMENSION\}\} of the response to the user query. Focus exclusively on \{\{FOCUS\_AREA\}\} using the specified criteria.

\vspace{0.5em}
\textbf{[Criteria]}
\begin{itemize}
    \setlength\itemsep{0pt}
    \item \textbf{Exceptionally Good \{\{DIMENSION\}\} (5 points):} \{\{LEVEL\_5\_DEFINITION\}\}
    \item \textbf{Very  Good\{\{DIMENSION\}\} (4 points):} \{\{LEVEL\_4\_DEFINITION\}\}
    \item \textbf{Adequately Good \{\{DIMENSION\}\} (3 points):} \{\{LEVEL\_3\_DEFINITION\}\}
    \item \textbf{Partially Good \{\{DIMENSION\}\} (2 points):} \{\{LEVEL\_2\_DEFINITION\}\}
    \item \textbf{Poorly Good\{\{DIMENSION\}\} (1 point):} \{\{LEVEL\_1\_DEFINITION\}\}
    \item \textbf{Not Good At All\{\{DIMENSION\}\} (0 points):} \{\{LEVEL\_0\_DEFINITION\}\}
\end{itemize}

\vspace{0.5em}
\textbf{[Scoring Rules]}
\begin{itemize}
    \setlength\itemsep{0pt}
    \item Provide reasons for each score by indicating specific strengths or deficiencies within the Response. Reference exact text passages to justify the score, ensuring that each reason is concrete and aligns with the criteria requirements.
    \item Be very STRICT and do not be misled by format or length.
    \item Scoring Range: Assign an integer score between 0 to 5.
\end{itemize}

\vspace{0.5em}
\textbf{[Input Data]} \\
\textbf{User Query:} \texttt{\{\{prompt\}\}} \\
\textbf{Response:} \texttt{\{\{response\}\}}

\vspace{0.5em}
\textbf{[Output Format]}\\
The output should be a JSON object containing the evaluation results for the criterion, and should not contain any other text.
\begin{verbatim}
{
"reason": <string>, # A detailed rationale substantiating
"score": <integer> # The assigned score
}
\end{verbatim}
\end{promptbox*}

\label{sec:appendix}

\end{document}